\newtheorem{prop}{Proposition}
\title{Neural Time Warping For Multiple Sequence Alignment}
\author{Keisuke Kawano, Takuro Kutsuna, Satoshi Koide}
\date{}
\begin{document}
%
\maketitle
\begin{center}
Toyota Central R\&D Labs. Inc., Nagakute, Aichi, Japan
\end{center}

\begin{abstract}
  Multiple sequences alignment~(MSA) is a traditional and challenging task for time-series analyses.
  The MSA problem is formulated as a discrete optimization problem and is typically solved by dynamic programming.
  However, the computational complexity increases exponentially with respect to the number of input sequences.
  In this paper, we propose \emph{neural time warping~(NTW)} that relaxes the original MSA to a continuous optimization and obtains the alignments using a neural network.
  The solution obtained by NTW is guaranteed to be a feasible solution for the original discrete optimization problem under mild conditions.
  Our experimental results show that NTW successfully aligns a hundred time-series and significantly outperforms existing methods for solving the MSA problem.
  In addition, we show a method for obtaining average time-series data as one of applications of NTW.
  Compared to the existing barycenters, the mean time series data retains the features of the input time-series data.
\end{abstract}

\section{Introduction}
\label{sec:intro}

Time-series alignment is one of the most fundamental operations in processing sequential data, and has various applications including speech recognition~\cite{shimodaira2002dynamic}, computer vision~\cite{chang2019discriminative, trigeorgis2016deep}, and human activity analysis~\cite{sheikh2005exploring,gritai2009matching}.
Alignment can be written as a discrete optimization problem that determines how to warp each sequence to match the timing of patterns in the sequences.
Dynamic time warping~(DTW)~\cite{sakoe1978dynamic} is a prominent approach, which finds the optimal alignment using the dynamic programming technique.
However, for \textit{multiple sequence alignment~(MSA)}, it is difficult to utilize DTW because the computational complexity increases exponentially with the number of time-series data.
In this paper, we propose a new algorithm, \textit{neural time warping~(NTW)} to solve MSA, which can even be utilized for one hundred time-series data.

An important idea for MSA is to relax the discrete optimization problem to a continuous optimization problem with the warpings modeled by combining basis functions~\cite{zhou2012generalized, khorram2019trainable}.
Generalized time warping~(GTW)~\cite {zhou2012generalized} has been proposed to express warpings by a linear combination of predefined basis functions.
However, GTW requires a large number of basis functions to express complex warpings.
Trainable time warping~(TTW)~\cite{khorram2019trainable} models the warpings using discrete sine transform~(DST), i.e., sine functions as the basis functions.
TTW controls the non-linearity of the warpings by limiting the number of DST coefficients to avoid sudden changes in the warpings.
The solution of the original problem can be obtained by converting the solution of the relaxed continuous optimization.
However, the converted warpings of the existing methods may not be valid solutions for the original MSA problems.
In contrast, NTW guarantees the validity; as such, NTW has the ability to express flexible warpings.

Neural networks have been successfully applied to various types of applications, and recently some neural network-based alignment methods have been proposed~\cite{Dogan18neumatch, grabocka2018neuralwarp}.
NeuMATCH~\cite {Dogan18neumatch} is a model that outputs alignments for the matching of videos and texts. 
NeuralWarp~\cite{grabocka2018neuralwarp} predicts whether or not to align frames of the sequence using embedded features. 
However, these methods are supervised learning-based approaches, i.e., they require many true alignments, while NTW aligns the sequences optimizing a neural network in an unsupervised manner for each MSA problem.

One of applications of time-series alignment is to find a representative time-series from a large number of time-series data.
DTW Barycenter Averaging~(DBA)~\cite{Petitjean2011-DBA} provides DTW barycenters, which minimizes the sum of the DTW discrepancies to the input time-series.
DBA repeats (1) aligning each time-series to the current barycenter sequence and (2) updating the barycenter alternately.
Unlike DTW barycenters, we can easily obtain statistics (e.g., averages and standard deviation) of the aligned time-series using NTW.

\textbf{Contributions:}
For the executable and accurate MTSAs, we propose NTW, which models flexible warpings with a neural network, thereby guaranteeing the feasibility after discretization~(Section~\ref{sec:ntw}).
To mitigate poor local minima problems that often occur in the gradient-based optimization of MSA problems, we propose an annealing method (Section~\ref{sec:annealing-method}), which aligns only the low-frequency parts in the early steps of the optimization, and then gradually aligns the high-frequency parts.
Furthermore, we demonstrate the advantage of NTW against existing methods using UCR time series classification archive~\cite{UCRArchive}~(Section~\ref{sec:experiments}).
Lastly, we show some examples of averaged time-series data obtained from aligned sequences~(Section~\ref{sec:average}).

\textbf{Notations:}
We denote $\{0, \dots, N\}$ as $\llbracket N \rrbracket$.
$\boldsymbol{1}_N \in \mathbb R^N$ is a $N$ dimensional vector of ones.

\section{Neural Time Warping}
\label{sec:ntw}

\subsection{Problem setting}
\label{sec:problem-setting}

Given $N$ time-series, $x_i=[x_i^{(0)}, \dots , x_i^{(T_i)}] \in \mathbb R^{T_i+1}, i=1,\dots, N$,
the MSA problem can be formulated as the optimization problem, which minimizes the sum of distances between warped time-series:
\begin{align}
  \label{eq:problem}
  \min_{\tau\in \mathcal T} \sum_{z=0}^{Z} \sum_{i=1}^{N} \sum_{j=1}^N d(\hat{x}_i(z; \tau_i), \hat{x}_j(z; \tau_j)).
\end{align}
Here, $\mathcal T$ is a set of warpings $\tau=[\tau_1,\dots, \tau_N]$, which satisfies the following constraints for all $\tau_i: \llbracket Z \rrbracket \mapsto \llbracket T_i \rrbracket$.
\begin{align}
  \label{eq:monotonicity}
    & \tau_i(z) \leq \tau_i(z+1),    \ \ \ \ \ \ \ \     \text{(monotonically nondecreasing)}     \\
  \label{eq:continuity}
    & \tau_i(z+1) - \tau_i(z) \leq 1,\     \text{(continuity)}         \\
  \label{eq:boundary}
    & \tau_i(0)=0, \tau_i(Z)=T_i.    \    \text{(boundary conditions)}
\end{align}
$\hat{x_i}(z; \tau_i) \coloneqq \sum_{t=0}^{T_i} x_i^{(t)} \delta(t - \tau_i(z))$ denotes a function~($\llbracket Z \rrbracket \mapsto \mathbb R$) corresponding to a warped time-series of $x_i$ by $\tau_i$, where $\delta$ is the delta function.
$d$ is a distance function between frames~(e.g., Euclidean distance) and the length of the warped time-series is $Z+1$.
Note that we can extend the alignment problem to multi-dimensional time-series using an appropriate distance function $d$~\cite{trigeorgis2016deep}.
DTW~\cite{sakoe1978dynamic} obtains the optimal $\tau$ and $Z$ with the dynamic programming technique.
However, the computational complexity of DTW is $\mathcal O(\prod_{i=1}^N T_i)$, thereby making it difficult to apply DTW to the MSA problem.

\subsection{Continuous warpings}
\label{sec:warp-cont-doma}
To reduce the computational complexity, NTW relaxes the discrete optimization problem to a continuous optimization problem, and searches the optimal \emph{continuous warpings} $\tau'(s) = [\tau'_{1}(s),\dots,\tau'_{N}(s)]:[0, S]\mapsto [0, 1]^N$ instead of~$\tau$.
Note that any positive value can be assigned to $S$ without a loss of generality.
First, we interpolate the input time-series with sinc interpolation:
\begin{align}
  x'_i(t') \coloneqq \sum_{t=0}^{T_i} x_i^{(t)} \text{sinc} \left(t - t'T_i \right),
  \text{sinc}(t) \coloneqq \begin{cases}
    \frac{\text{sin}(\pi t)}{t} & t \neq 0\\
    1 & t=0 ,
  \end{cases} \nonumber
\end{align}
where we assume $t' \in [0, 1]$.
With a continuous warping~$\tau'_i(s)$, the warped time-series can be written as $x'_i(\tau'_{i}(s) )$~\cite{khorram2019trainable}. 
For clarity, we denote the warpings of the original MSA problem as \emph{discrete warpings}.


Given a continuous warping $\tau'$, a discrete warping $\tilde{\tau}_i$ of length~$Z+1$ can be calculated in the following way:
(1) prepare monotonically increasing points $[s_0, \ldots, s_Z]$ such that $s_z \in [0, S]$, $s_0=0$, and $s_Z=S$,
(2) calculate the discrete warping $\tilde{\tau}_i$ for $i=1,\ldots, N$ by $\tilde{\tau}_i(z) \coloneqq \lfloor  T_i \tau'_i(s_z)  \rfloor,$ where $\lfloor \cdot \rfloor$ is a floor function.
In the following sections, we assume $s_z$ to be at regular intervals in $[0,S]$, i.e., $s_z = \frac{z}{Z}S$ for $z \in \llbracket Z \rrbracket$.
A discrete warping obtained from a continuous warping by the above procedure is called a \emph{sampled warping}.

\subsection{Modeling of the warpings}
\label{sec:modeling-warpings}
It is necessary to model the warpings~$\tau'$ with a flexible function that can operate even when optimal warpings have complex forms.
However, it is difficult to guarantee that the sampled warpings satisfy the alignment constraints~(\ref{eq:monotonicity}--\ref{eq:boundary}) with the flexible function.
NTW carefully models $\tau'_\theta$ using the orthogonal basis $\boldsymbol{e}_k\in \mathbb R^N, k=1, \dots, N$, and a non-linear function $\phi_\theta (s): \mathbb [0, S] \mapsto \mathbb R^{N-1}$ with its parameters $\theta$ as follows.
\begin{align}
  \label{eq:warp_model}
  \tau'_{\theta}(s) \coloneqq s \boldsymbol{e}_1 + s\left(S - s\right) \sum_{k=1}^{N-1}[\phi_{\theta} (s)]_k \boldsymbol{e}_{k+1},
\end{align} where $\boldsymbol{e}_1 = \frac{1}{\sqrt{N}} \boldsymbol{1}_N$.
The following proposition holds for any functions~$\phi_\theta$ in Eq.\eqref{eq:warp_model}.
\begin{prop}
  \label{prop:boundary}
  If we set $S=\sqrt N$, the sampled warpings~$\tilde{\tau}_{i}$ obtained from $\tau'_\theta$ in Eq.\eqref{eq:warp_model} satisfies the boundary condition~\eqref{eq:boundary}. 
\end{prop}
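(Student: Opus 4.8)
The plan is a direct evaluation of the model \eqref{eq:warp_model} at the two endpoints $s_0$ and $s_Z$ of the sampling grid, using the fact that under the regular-interval convention $s_z = \frac{z}{Z}S$ we have $s_0 = 0$ and $s_Z = S$. So it suffices to check that $\tau'_i(0) = 0$ and $\tau'_i(S) = 1$ for every $i$, since then the floor operation in $\tilde\tau_i(z) = \lfloor T_i \tau'_i(s_z)\rfloor$ yields $\tilde\tau_i(0) = 0$ and $\tilde\tau_i(Z) = T_i$.

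First I would evaluate at $s = s_0 = 0$. Both summands in \eqref{eq:warp_model} carry an explicit factor of $s$ — the first term is $s\boldsymbol{e}_1$, and the second has the scalar prefactor $s(S-s)$ — so $\tau'_\theta(0)$ is the zero vector of $\mathbb{R}^N$ regardless of the choice of $\phi_\theta$. Hence $\tau'_i(s_0) = 0$ for every $i$, giving $\tilde\tau_i(0) = \lfloor T_i \cdot 0\rfloor = 0$.

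Next I would evaluate at $s = s_Z = S$. The quadratic prefactor $s(S-s)$ vanishes at $s = S$, so the entire sum over $k$ drops out and $\tau'_\theta(S) = S\,\boldsymbol{e}_1 = \frac{S}{\sqrt N}\,\boldsymbol{1}_N$, using $\boldsymbol{e}_1 = \frac{1}{\sqrt N}\boldsymbol{1}_N$. Choosing $S = \sqrt N$ makes every component equal to $1$, i.e. $\tau'_i(s_Z) = 1$ for all $i$, whence $\tilde\tau_i(Z) = \lfloor T_i \cdot 1\rfloor = T_i$. Combining the two endpoint computations gives exactly the boundary condition \eqref{eq:boundary}.

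There is essentially no obstacle here: the only points needing care are (i) recalling from the regular-interval assumption that the grid endpoints are exactly $0$ and $S$, and (ii) observing that $S = \sqrt N$ is forced precisely so that $S/\sqrt N = 1$ — any other value of $S$ would still satisfy the left-endpoint condition but fail the right one. Note also that the orthogonality of the $\boldsymbol{e}_k$ and the particular form of $\phi_\theta$ are irrelevant to this proposition; those features come into play only for the monotonicity \eqref{eq:monotonicity} and continuity \eqref{eq:continuity} constraints discussed later.
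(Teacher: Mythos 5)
Your endpoint evaluation is correct: the factor $s$ kills both terms at $s_0=0$, the factor $(S-s)$ kills the nonlinear term at $s_Z=S$, and $S=\sqrt N$ makes $\tau'_{\theta,i}(S)=1$, so the floors give $\tilde\tau_i(0)=0$ and $\tilde\tau_i(Z)=T_i$. The paper states Proposition~\ref{prop:boundary} without proof, and your argument is exactly the immediate computation it leaves implicit, including the correct observation that orthogonality of the $\boldsymbol{e}_k$ plays no role here.
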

In the followings, we assume $S=\sqrt N$. Then the following proposition holds:
\begin{prop}
  \label{prop:1}
  If $\tau'_{\theta,i}(s_{z+1})-\tau'_{\theta,i}(s_z)\geq 0$ $(\forall i \in \{1,\dots,N\}$, $\forall z\in \llbracket Z \rrbracket)$ and $Z \geq N \max_{j=1}^N T_j$,
  then the sampled warping $\tilde\tau$ obtained from $\tau'_\theta$ in Eq.\eqref{eq:warp_model} satisfies the monotonicity~\eqref{eq:monotonicity} and the continuity~\eqref{eq:continuity}.
\end{prop}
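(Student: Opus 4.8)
The plan is to handle the two conditions separately: monotonicity is essentially immediate, whereas continuity rests on a hidden linear constraint that \eqref{eq:warp_model} places on the coordinates of $\tau'_\theta$. For monotonicity, I would just recall that $\tilde\tau_i(z)=\lfloor T_i\,\tau'_{\theta,i}(s_z)\rfloor$ with $T_i>0$, that the hypothesis gives $\tau'_{\theta,i}(s_z)\le\tau'_{\theta,i}(s_{z+1})$, and that $x\mapsto\lfloor T_i x\rfloor$ is nondecreasing; hence $\tilde\tau_i(z)\le\tilde\tau_i(z+1)$, which is \eqref{eq:monotonicity}.

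The core of the argument is an identity obtained by projecting \eqref{eq:warp_model} onto $\boldsymbol e_1=\frac{1}{\sqrt N}\boldsymbol 1_N$. Since $\boldsymbol e_2,\dots,\boldsymbol e_N$ are orthogonal to $\boldsymbol e_1$, every $\phi_\theta$-term is annihilated, and one gets $\langle\tau'_\theta(s),\boldsymbol e_1\rangle=s$, i.e. $\sum_{i=1}^N\tau'_{\theta,i}(s)=\sqrt N\,s$ for every $s$ and every choice of $\phi_\theta$. Differencing this at $s_{z+1}$ and $s_z$, and using $s_z=\frac{z}{Z}S=\frac{z}{Z}\sqrt N$ so that $s_{z+1}-s_z=\sqrt N/Z$, yields $\sum_{i=1}^N\bigl(\tau'_{\theta,i}(s_{z+1})-\tau'_{\theta,i}(s_z)\bigr)=N/Z$. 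Because each summand is nonnegative by hypothesis, every single increment satisfies $\tau'_{\theta,i}(s_{z+1})-\tau'_{\theta,i}(s_z)\le N/Z$, and therefore $T_i\bigl(\tau'_{\theta,i}(s_{z+1})-\tau'_{\theta,i}(s_z)\bigr)\le T_i N/Z\le 1$, the last step using $Z\ge N\max_j T_j\ge N T_i$.

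To finish, I would invoke the elementary fact that $0\le b-a\le 1$ implies $\lfloor b\rfloor-\lfloor a\rfloor\le 1$ (indeed $\lfloor b\rfloor\le b\le a+1<\lfloor a\rfloor+2$, and both floors are integers), applied with $a=T_i\tau'_{\theta,i}(s_z)$ and $b=T_i\tau'_{\theta,i}(s_{z+1})$; this gives $\tilde\tau_i(z+1)-\tilde\tau_i(z)\le 1$, which is \eqref{eq:continuity}. The only non-routine point—and thus the main obstacle—is noticing that continuity of the sampled warping does \emph{not} follow from monotonicity alone (a single nonnegative increment of $\tau'_{\theta,i}$ could a priori approach $1$), and that the required per-coordinate step bound comes exactly from the $\boldsymbol e_1$-projection identity $\sum_i\tau'_{\theta,i}(s)=\sqrt N\,s$ together with nonnegativity of all increments; the rest is bookkeeping with the floor function and the uniform grid $s_z=zS/Z$.
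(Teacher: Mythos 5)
Your proposal is correct and follows essentially the same route as the paper: the monotonicity is immediate from the floor of a nondecreasing quantity, and the continuity bound comes from the identity $\sum_{i=1}^N\tau'_{\theta,i}(s)=\sqrt N\,s$ (the paper derives it by summing coordinates and using $\sum_i \boldsymbol{e}_{k+1,i}=0$, which is exactly your projection onto $\boldsymbol{e}_1$), followed by the same nonnegativity argument, multiplication by $T_i$, and the floor-function inequality. No gaps; your explicit justification of the floor step is a minor polish on the paper's stated fact.
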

\begin{proof}
  From the first assumption, the sampled warpings obviously satisfy the monotonicity~\eqref{eq:monotonicity}.
  By considering the regular intervals $s_{z+1}- s_z= \frac{\sqrt N}{Z}$, we have
    \begin{align}
        \label{eq:diff}
      &\sum_{i=1}^N \tau'_{\theta,i}(s_{z+1}) - \tau'_{\theta,i}(s_{z})  \nonumber\\
      =&\sum_{i=1}^N \frac{\sqrt N}{Z} \boldsymbol{e}_{1,i} + \sum_{k=1}^{N-1} \big[s_{z+1}(\sqrt N - s_{z+1})\phi_\theta(s_{z+1}) \nonumber \\
        &\ \ \ \ \ \ \ \ \ \  \ \ \ \ \ \ \ \ \ \ \ \ \  -  s_z(\sqrt N - s_{z})\phi_\theta(s_z) \big]_k \sum_{i=1}^N\boldsymbol{e}_{k+1, i}.
    \end{align}
  We also have $\sum_{i=1}^N \boldsymbol{e}_{k+1, i}= 0, \forall k\in \{1,\dots, N-1\}$ because $\boldsymbol{e}_{k+1}$ are orthogonal to $\boldsymbol{e}_1=\frac{1}{\sqrt N} \boldsymbol{1}_N$.
  Therefore, we have $\sum_{i=1}^N \tau'_{\theta,i}(s_{z+1}) - \tau'_{\theta,i}(s_{z}) =  \frac{N}{Z}$.
  Furthermore, from $\tau'_{\theta, i}(s_{z+1}) - \tau'_{\theta, i}(s_z)\geq 0$, $\tau'_{\theta,i}(s_{z+1}) - \tau'_{\theta,i}(s_{z}) \leq \frac{N}{Z}$ holds for all $i$.
  By multiplying $T_i$, for all $i$, we obtain
  \begin{align}
    \begin{split}
     T_i (  \tau'_{\theta, i}(s_{z+1}) - \tau'_{\theta, i}(s_z) ) \leq \frac{T_i N}{Z}  \leq \frac{T_i}{\max_{j=1}^N T_j} \leq  1.
    \end{split}
  \end{align}
  Because $a-b \leq 1 \Rightarrow \lfloor a \rfloor - \lfloor b \rfloor \leq 1$ holds in general, we obtain the continuity~\eqref{eq:continuity} as
  \begin{align}
  \tilde\tau_{i}(s_{z+1})-\tilde\tau_{i}(s_z) = \lfloor T_i\tau'_{\theta, i}(s_{z+1}) \rfloor -  \lfloor T_i\tau'_{\theta, i}(s_{z}) \rfloor \leq  1. \nonumber
\ \ \ \ \ \     \qedhere
  \end{align}
\end{proof}
Instead of Eq.\eqref{eq:warp_model},  if we employ another model, e.g.,
$$\tau'_{\theta}(s) = s\left(\sqrt N - s\right) \sum_{k=1}^{N}[\psi (s)]_k \boldsymbol{e}_{k} + \frac{s}{S}$$  with $\psi: \mathbb R \mapsto \mathbb R^N$,
  the sampled warpings satisfy the boundary condition~\eqref{eq:boundary}, but not necessarily the continuity~\eqref{eq:continuity}.
Similar to the alternative model, a discrete warping obtained with TTW~\cite{khorram2019trainable} also satisfies the boundary condition~\eqref{eq:boundary} and the monotonicity~\eqref{eq:monotonicity}, but does not always satisfy the continuity~\eqref{eq:continuity}.
A discrete alignment obtained with GTW~\cite{zhou2012generalized} does not always satisfy the boundary condition~\eqref{eq:boundary} and the continuity~\eqref{eq:continuity}.
In contrast, NTW guarantees that the obtained discrete alignment satisfies the conditions~(\ref{eq:monotonicity}--\ref{eq:boundary}) as long as $\tau'_\theta$ satisfies the condition described in Proposition 2. In NTW, a neural network is employed as the non-linear functions for the warpings.
The network parameters can be learned with end-to-end gradient-based optimization.

\subsection{Optimization}

From Proposition~\ref{prop:1}, we consider the following continuous optimization problem for NTW.
\label{sec:optimization}
 \begin{align}
   \label{eq:lossfunc_org}
   \begin{split}
     \min_{\theta} \int_{s=0}^{\sqrt N} \sum_{i=1}^{N} \sum_{j=1}^N (x'_i(\tau'_{\theta,i}(s))- x'_j(\tau'_{\theta,j}(s)))^2 ds, \\
     \text{s.t.,\ } \tau'_{\theta,i}(s_{z+1}) \geq \tau'_{\theta,i}(s_z), (\forall i\in \{1,..., N\}, \forall z \in \llbracket Z \rrbracket).
   \end{split}
 \end{align}
 To utilize recent optimization techniques for neural networks~\cite{kingma2015adam}, we employ the following unconstrained optimization problem with a penalty term $r$.
\begin{align}
  \label{eq:lossfunc}
  &\min_{\theta} \int_{s=0}^{\sqrt{N}}\sum_{i=1}^{N} \sum_{j=1}^N (x'_i(\tau'_{\theta,i}(s))- x'_j(\tau'_{\theta,j}(s)))^2ds + \lambda  r(\tau'_\theta),   \nonumber \\
  &r(\tau'_\theta) \coloneqq \sum_{i=1}^{N} \sum_{z=0}^{Z-1} \max \left( \tau'_{\theta,i}(s_{z+1}) -  \tau'_{\theta,i}(s_{z}) , 0 \right),
\end{align}
where $\lambda$ is a hyperparameter for the penalty term.
Using a large enough $\lambda$, the solutions satisfying the monotonicity on $s_z$ can be obtained.
We approximate the integration by the trapezoidal rule because $s$ is a one dimensional variable.
Note that Proposition 2 hold as long as $r(\tau'_\theta)=0$.
Finally, we obtain the following objective function for NTW,
 \begin{align}
   \label{eq:lossfunc2}
   &\min_{\theta} \sum_{z'=0}^{Z'-1}\sum_{i=1}^{N} \sum_{j=1}^N (x'_i(\tau'_{\theta,i}(s_{z'+1}))- x'_j(\tau'_{\theta,j}(s_{z'})))^2ds + \lambda  r(\tau'_\theta),   \nonumber \\
   &r(\tau'_\theta) \coloneqq \sum_{i=1}^{N} \sum_{z=0}^{Z-1} \max \left( \tau'_{\theta,i}(s_{z+1}) -  \tau'_{\theta,i}(s_{z}) , 0 \right),
 \end{align}
 where $z'$ is the sampling points for the trapezoidal rule.

The objective function is calculated by GPU parallel computing.
Moreover, it can be easily implemented with modules for tensor operations~(e.g., PyTorch~\cite{paszke2017automatic}) and the gradients of the parameters~$\theta$ can be obtained by automatic differentiation~\cite{baydin2018automatic}.
When GPU memory is not enough, mini-batch optimization can be used for NTW.

\subsection{Annealing method}
\label{sec:annealing-method}
A further problem of the optimization for NTW is the presence of many poor local minima, which correspond to align only part of the high-frequency components of the input time-series.
To mitigate this problem, we extend the \emph{sinc} function with an annealing parameter $\alpha$ as
\begin{align}
  \text{sinc}'(t; \alpha) = \begin{cases}
    \frac{\text{sin}(\alpha \pi t)}{\alpha t} & t \neq 0\\
    1 & t=0.
  \end{cases}
\end{align}
When $\alpha=1$, $\text{sinc}'$ is equivalent to the original sinc function.
Figure~~\ref{fig:annealing} shows an example of interpolated time-series varying the annealing parameter $\alpha$.
As shown in Figure~~\ref{fig:annealing}, when $\alpha \gg 1$, the interpolated time-series represent only the low-frequency components of the original time-series.
By gradually annealing $\alpha$ from a sufficiently large value to 1, the optimization first aligns the low-frequency components of the time-series and then gradually aligns the high-frequency components.
To demonstrate the annealing technique, Figure~~\ref{fig:annealing_loss} shows an example of loss histories (i.e., the values of the objective function in the problem~\eqref{eq:lossfunc_org}) during the optimizations.
For the ``w/o annealing'' setting, the annealing parameter $\alpha$ is fixed at 1 during the optimization, while we initialized $\alpha$ to 100 then multiplied 0.99 in each update for the ``with annealing'' setting.
Details of the implementation are presented in Section~\ref{sec:experiments}.
As the result shows, NTW without the annealing easily falls into local minima, while NTW with the annealing converges to better solutions.

\begin{figure}[t]
  \centering
  \subfigure[Interpolated time-series varying annealing parameters (50words dataset, label 33~\cite{UCRArchive}).]{
    \label{fig:annealing}
    \includegraphics[width=0.3\linewidth]{./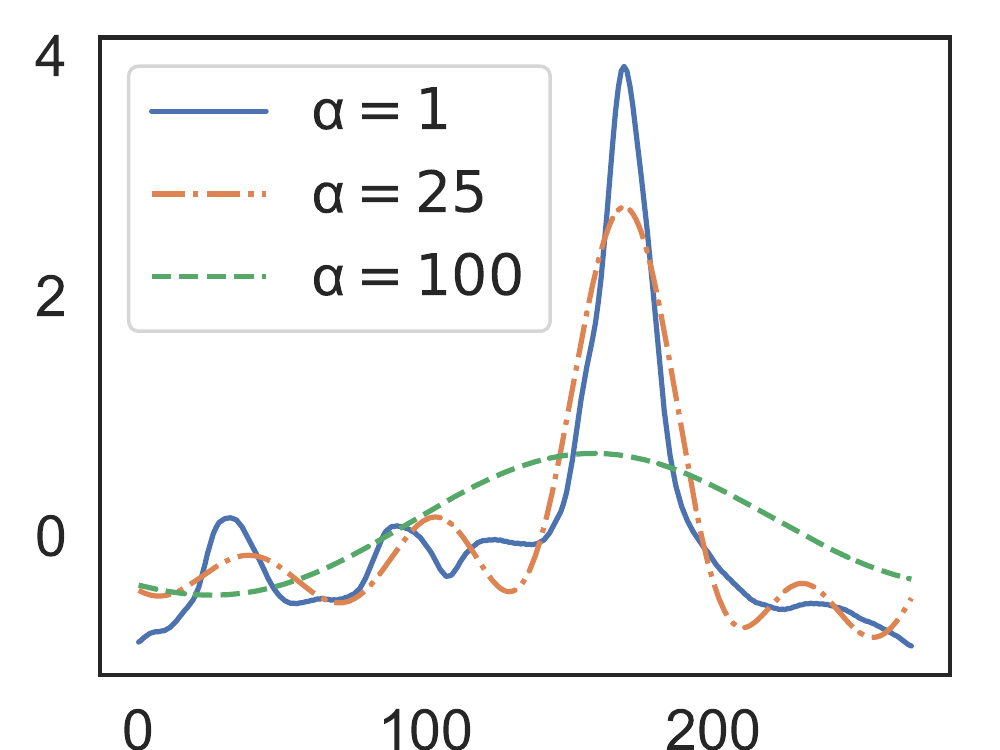}
    }\hspace{1.2mm}
  \subfigure[Loss histories of NTW, with and without annealing technique.]{
    \label{fig:annealing_loss}
    \includegraphics[width=0.3\linewidth]{./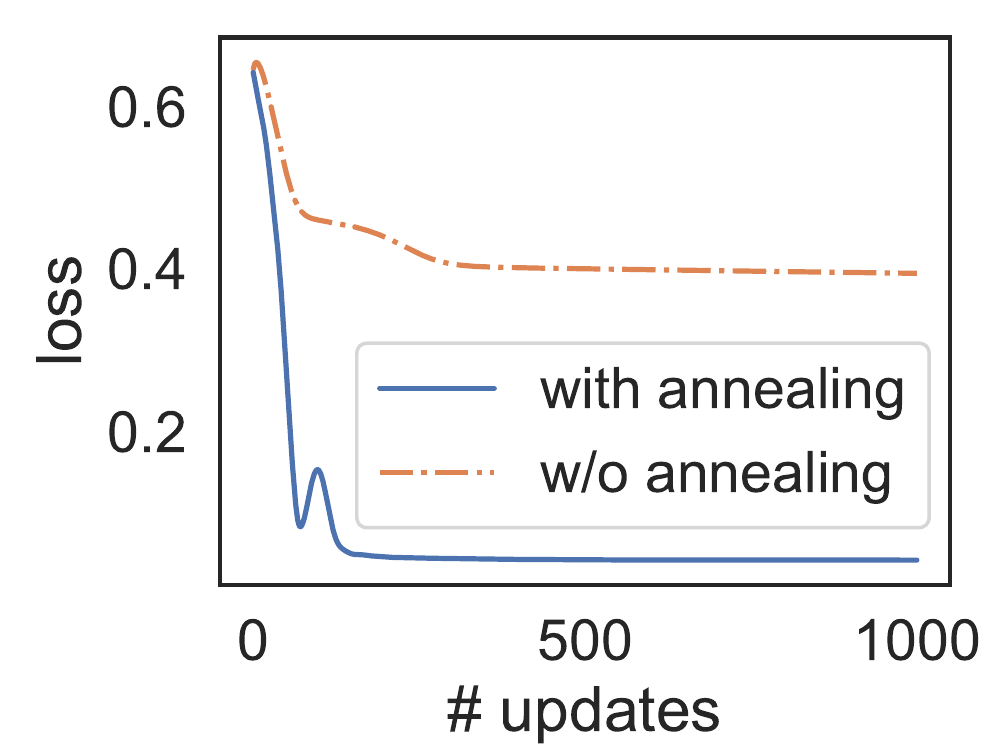}
  }
  \caption{Examples for annealing method}
\end{figure}

\subsection{Warped Average of Time-Series Data}
Given $N$ aligned time-series, $x'_i\in \mathbb R^{Z}, i=1,\dots,N$, we can easily obtain statistics of the aligned time-series.
For example, the average time-series can be obtained as $\bar x \coloneqq \frac{1}{N} \sum_{i=1}^N x'_i$.
Similarly, the standard deviation of the time-series data is $x_\text{SD}(s) \coloneqq \sqrt{\frac{1}{N} \sum_{i=1}^N \left( x'_i(\tilde \tau_i(s)) - \bar x(s) \right)^2  }$.
We call $\bar x(s)$ and $x_\text{SD}(s)$ warped average and warped standard deviation, respectively.

\section{Experiments}
\label{sec:experiments}

\subsection{Dataset}
\label{sec:dataset-1}
In this section, we compare NTW with existing methods for MSA, namely GTW~\cite{zhou2012generalized} and TTW~\cite{khorram2019trainable} using the UCR time series classification archive (2015)~\cite{UCRArchive}.
We also employ UCR time series classification archive (2015) for comparing warped averages acquired by NTW with barycenters.

This archive includes 85 datasets of time series, such as human actions and medical imaging.
One dataset contains up to 60 class labels and is divided into train and test parts.
For the evaluation, we use all training splits of each class in each dataset if they have more than one time-series data~(642 sets in total).
Due to limited computational resources, we randomly sampled 100 time-series from a set containing more than 100 time-series data.

\subsection{Multiple Sequences Alignment}
\textbf{Evaluation metric: }
We cannot evaluate the accuracies of MSA directly, because it is difficult to obtain the true alignments for a large number of time-series data.
Following~\cite{khorram2019trainable}, we employed the barycenter loss;
\begin{align}
\mathcal L_b \coloneqq \frac{1}{NZ} \sum_{i=1}^N \mathcal D_{\text{DTW}} \left(x_i, \frac{1}{N}\sum_{j=1}^N x'_j(\tilde{\tau}_ {j}) \right),
\end{align}
where $\mathcal D_{\text{DTW}}$ denotes the DTW discrepancy~\cite{sakoe1978dynamic}.

To compare validities of the sampled warpings, we also employ the following validity scores, which measure the ratios of frames satisfying the constraints of MSA~(\ref{eq:monotonicity}--\ref{eq:boundary}), respectively;
\begin{align}
  V_\text{mono} &\coloneqq \frac{1}{N Z} \sum_{z=0}^{Z-1} \sum_{i=1} ^N  I( \tilde{\tau}_i(z+1) \geq \tilde{\tau}_i(z)) \\
  V_\text{cont} &\coloneqq  \frac{1}{N Z} \sum_{z=0}^{Z-1} \sum_{i=1} ^N I( \tilde{\tau}_i(z+1) - \tilde{\tau}_i(z) \leq 1)\\
  V_\text{bound}&\coloneqq \frac{1}{2} \sum_{i=1}^N\left(  I(\tilde{\tau}_i(0) = 0) +  I(\tilde{\tau}_i(Z) = T_i) \right),
\end{align} where $I(cond)$ is 1 when $cond$ is true, and 0 otherwise.

\textbf{Implementations: }
We implemented NTW using PyTorch (version 1.2)~\cite{paszke2017automatic} and experimented with GPUs (NVIDIA TITAN RTX) (See Appendix A for more details).
The neural network is a four-layered, fully connected network with ReLU activations~\cite{nair2010rectified} and skip connections~\cite{huang2017densely}, where the number of the nodes are 1-512-512-1025-[$N-1$], and all weights and biases are initialized to zero.
Via this initialization, NTW initializes the warpings uniformly (i.e., all frames are uniformly warped to align the lengths of the time-series).
See Section~\ref{sec:nnan} for more details.
We updated the parameter 1000 times using Adam~\cite{kingma2015adam} with its default parameters in the PyTorch~\cite{paszke2017automatic} implementation except for the learning rate of 0.0001.
The penalty parameter is set to $\lambda = 1000$, with which the solutions of NTW almost satisfy the monotonicity condition as shown in the following section.
The number of sampling points for the trapezoidal rule is the same as the length of the input time-series.
The annealing parameter $\alpha$ is initialized to 100 and multiplied by 0.99 after every update while it is larger than 1.
For more details, see Section~\ref{sec:pytorchimp}.

For TTW~\cite{khorram2019trainable} and GTW~\cite{zhou2012generalized}, we employed MATLAB implementations provided by the corresponding authors\footnote{TTW: \url{https://github.com/soheil-khorram/TTW}, \\ GTW: \url{https://github.com/LaikinasAkauntas/ctw}}.
As it was reported in~\cite{khorram2019trainable} that the performance of TTW varies depending on the number of DST coefficients~$K$, we employed two models for TTW~($K = 4$ and $16$).

\textbf{Results: }
Table~\ref{tab:barycenter_results} and Figure~~\ref{fig:msa} show some examples of MSA results and average values of barycenter losses, respectively; we chose better results of TTWs ($K=4$ and $K=16$) in each set for TTW (best $K$).
We conducted a pair-wise t-test~(one-sided) over the 642 results.
As shown in the Table~\ref{tab:barycenter_results}, NTW significantly outperformed both TTW and GTW (p-value$<$0.02) on the barycenter losses.
The examples of MSA in Fig~\ref{fig:msa} indicate that NTW has enough flexibility to approximate the complex warpings even when the number of time-series is 100~(uWaveGestureLibrary\_Y~(8) and yoga~(2)).
Table~\ref{tab:valid_ratio} shows the average values of the validity scores, which indicate the results of NTW satisfied the continuity~\eqref{eq:continuity} and boundary conditions~\eqref{eq:boundary} for all MSAs.
Note that the results of NTW rarely~(0.0003\% on average) broke the monotonicity because the unconstrained optimization~\eqref{eq:lossfunc} was used instead of constrained optimization~\eqref{eq:lossfunc_org}.

\begin{table}[t]
  \centering
  \caption{Average barycenter losses  $\mathcal L_b$ ($10^{-2}$).}
  \begin{tabular}{ccccccc}
                               \textbf{NTW}    & TTW (best $K$)  & TTW(4)    & TTW(16)    & GTW    \\ \toprule
                               \textbf{1.515}  &  1.645         &  1.715    & 1.812     & 1.600 \\ 
  \end{tabular}
  \label{tab:barycenter_results}
\end{table}

\begin{table}[t]
  \centering
  \caption{Average validity scores (\%)}
  \begin{tabular}{cccccc}
                            & \textbf{NTW}              & TTW(4)    & TTW(16)    & GTW    \\  \toprule
     $V_\text{mono}$  & 100.0 & 100.0 & 100.0 & 100.0  & \\
     $V_\text{cont}$    & 100.0 & 91.3  & 85.7  & 95.8 & \\
     $V_\text{bound}$       & 100.0 & 100.0 & 100.0 & 84.8  &
  \end{tabular}
  \label{tab:valid_ratio}
\end{table}

\begin{figure*}[t]
  \centering
  \includegraphics[width=0.97\linewidth]{./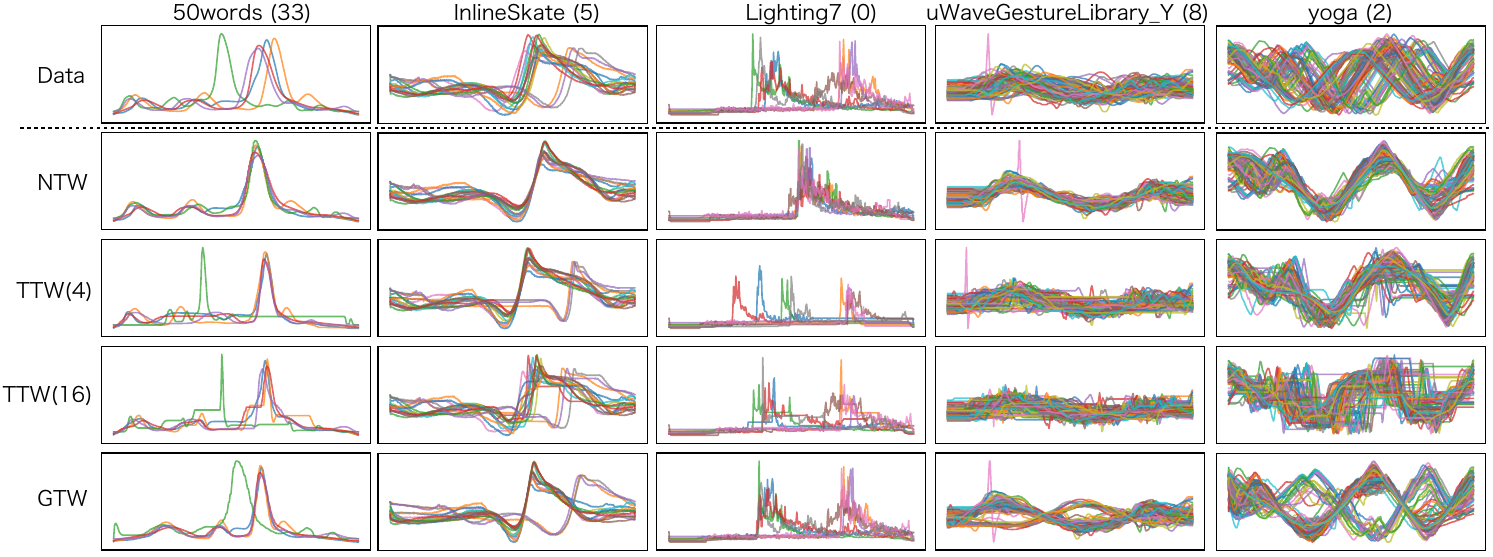}
  \caption{Examples of multiple sequence alignment results. Titles are dataset names and labels.}
  \label{fig:msa}
\end{figure*}

\subsection{Warped Average and Barycenters}
\label{sec:average}
We then visually compare the warped averages acquired by MTSAs and the DTW barycenter.
To calculate the warped average of time-series data for NTW, TTW and GTW, we utilize the aligned time-series data obtained in the previous section.
For DBA~\cite{Petitjean2011-DBA}, we employ the tslearn implementation~\cite{tslearn}.

In Figure~~\ref{fig:averageandbarycenters}, some examples of warped averages obtained by NTW, TTW and GTW and DTW barycenters are shown.
As the figure indicates, the warped averages obtained by NTW retain the characteristics of the input time-series data.
Although the DTW discrepancy between the input time-series data and the barycenters are small, the DTW barycenters obtained by DBA have the sharp peaks that do not exist in the input time-series data.
Note that the DTW discrepancies between the DTW barycenters and the input time-series data are smaller than the DTW discrepancies between the warped averages and the input time-series data.
This is because the sharp peaks can be warped to be matched to the input time-series data when calculating the DTW discrepancies.
The average time-series obtained with TTW and GTW could not represent the characteristics of the input time-series data for some datasets (e.g., Lighting7 (0)) because of the failure of the alignments.

\begin{figure}[t]
  \centering
  \includegraphics[width=0.97\linewidth]{./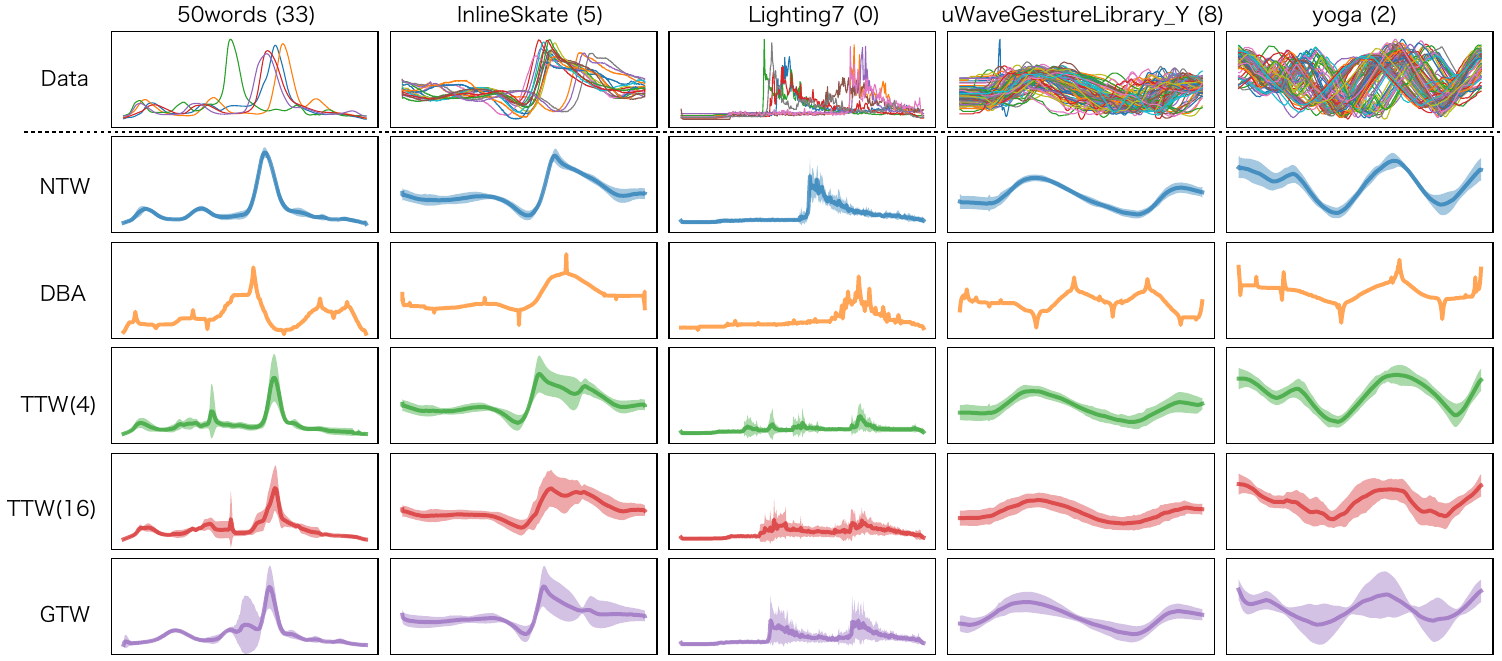}
  \caption{Examples of warped average and barycenters. Titles are dataset names and labels. The lines represent the warped averages (blue: NTW, green and red: TTW, and purple:GTW) and DTW barycenter (orange:DBA). The area denotes warped standard deviations. Note that we cannot obtain warped standard deviations for DTW barycenters.}
  \label{fig:averageandbarycenters}
\end{figure}

\section{Conclusion}
\label{sec:conclusion}

We proposed a new method for MSA, neural time warping~(NTW), which enabled alignments for as many as 100 time-series.
NTW modeled the warpings as outputs of a neural network optimized for each MSA problem.
The solution of NTW is guaranteed to be a valid solution to the original discrete problem under mild conditions.
We also proposed the annealing method to mitigate the problem of poor local minima.
Experiments using UCR time series classification archive showed that NTW outperformed the existing multiple alignment methods.
The warped averages obtained by NTW retain the characteristics of the input time-series data.

\pagebreak
\bibliographystyle{IEEEbib}
\bibliography{references}

\clearpage
\appendix
\section{Architecture of NTW}
\label{sec:nnan}

Figure~\ref{fig:architecture} illustrates the architecture of NTW employed for the experiments in Section~\ref{sec:experiments}.

\begin{figure}[h]
  \centering
  \includegraphics[width=0.6\linewidth]{./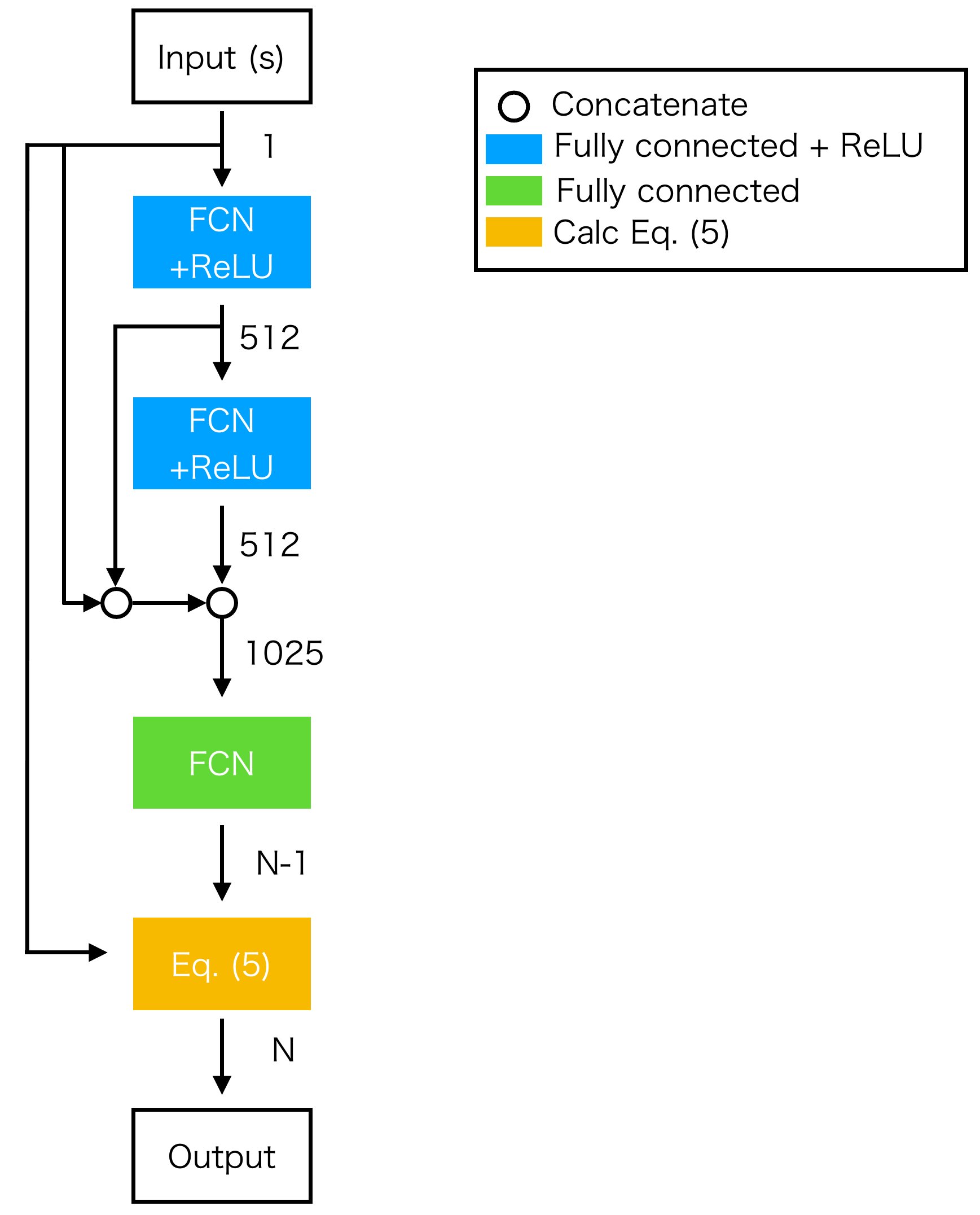}
  \caption{Architecture of NTW employed for the experiments in Section~\ref{sec:experiments}}
  \label{fig:architecture}
\end{figure}




\clearpage

\section{Pseudo-code of NTW}
\label{sec:pytorchimp}
For reproducibility, we show the PyTorch-like pseudo-code of NTW below.

\scriptsize
\begin{lstlisting}
class SincWarping(nn.Module):
    def __init__(self, warped_length, temperature,
                 temperature_multiplied=None):
        super(SincWarping, self).__init__()
        self.initial_temperature = temperature
        self.temperature = temperature
        self.temperature_multiplied = temperature_multiplied
        self.warped_length = warped_length

    def forward(self, Xs, tau):
        N, Tx, d = Xs.shape
        arange_Tx = torch.arange(Tx, device=Xs.device).float()

        ts = (tau * (Tx - 1)).unsqueeze(2).expand(N, self.warped_length, Tx) - arange_Tx
        temp = np.pi * (ts * 1 / self.temperature) + 1e-7
        sinc_ret = torch.sin(temp) / (temp)

        warped_Xs = torch.matmul(
            sinc_ret, Xs) / self.temperature

        if self.training:
            if self.temperature_multiplied is not None:
                self.temperature = max(self.temperature * self.temperature_multiplied, 1)
        return warped_Xs


class NTW(nn.Module):
    def __init__(self, N,
                 initial_temperature,
                 temperature_multiplied,
                 warped_length):
        super(NTW, self).__init__()

        self.warped_length = warped_length
        self.warp = SincWarping(warped_length, initial_temperature,
                                temperature_multiplied)
        self.net = Net(N - 1)
        d = torch.zeros(N - 1, N)
        for n in range(N - 1):
            d[n, n] = 1
        d = torch.cat([torch.ones(N).unsqueeze(0), d])
        self.bases = nn.Parameter(-torch.qr(d.t())[0], requires_grad=False)
        self.netout2fz = lambda net_outputs, sz, bases: (torch.cat([sz, net_outputs], 1) @ bases.t()).t()

    def forward(self, Xs):
        N = Xs.shape[0]
        sz = torch.linspace(0, N**0.5, self.warped_length).reshape(-1, 1).to(Xs.device)
        net_outputs = self.net(sz)
        tau = self.netout2fz(net_outputs, sz, self.bases)

        return self.warp(Xs, tau), tau

# main.py
1. # define a class of neural network for NTW
2. Xs = load_data()
3. ntw = NTW(**args)
4. while not converge
5.   warped_Xs, tau = ntw(Xs)
6.   (loss(warped_Xs) + penalty(tau)).backward()
7.   optimizer.step()
8. output warped_Xs


\end{lstlisting}

\end{document}